\theoremstyle{proposition}
\newtheorem{proposition}{Proposition}
\theoremstyle{definition}
\newtheorem{definition}{Definition}
\title{\LARGE \bf
Detection-Aware Trajectory Generation for a Drone Cinematographer 
}
\author{Boseong Felipe Jeon, Dongsuk Shim
        and~H. Jin Kim% <-this % stops a space
\thanks{*This material is based upon work supported by the Ministry of Trade, Industry \& Energy(MOTIE, Korea) under Industrial Technology Innovation Program. No.10067206, 'Development of Disaster Response Robot System for Lifesaving and Supporting Fire Fighters at Complex Disaster Environment'}% <-this % stops a space
\thanks{Department of mechanical and aerospace engineering,
        Seoul national university of South Korea
        {\tt\small \{a4tiv,tlaehdtjr01,hjinkim\}@snu.ac.kr}}%
}
\begin{document}

\newcommand{\RRR}{{\mathbb{R}}^{3}}
\newcommand{\RRRR}{{\mathbb{R}}^{4}}
\newcommand{\RR}{{\mathbb{R}}^{2}}
\newcommand{\R}{{\mathbb{R}}}

\newcommand{\CC}{{\mathbb{C}}}
\newcommand{\pcl}{{\mathcal{P}}}

\newcommand{\Paa}{\pcl_a}
\newcommand{\Pb}{\pcl_b}
\newcommand{\transl}{{\mathbf{x}}}
\newcommand{\colorTriple}{{\mathbf{c}}}
\newcommand{\predSeq}{\{ \hat{T}_{a,i} \}_{i=n+1}^{n+N}}
\newcommand{\predSeqRef}{\{ \hat{T}_{a,i} \}_{i=1}^{N}} 

\newcommand{\pred}{\hat{T}_{a}}
\newcommand{\predI}{\hat{T}_{a,i}}

\newcommand{\viewPoint}{\transl}
\newcommand{\viewPointI}{\viewPoint_{c,i}}
\newcommand{\viewPointII}{\viewPoint_{c,i+1}}

\newcommand{\predPointI}{\viewPoint_{a,i}}
\newcommand{\viewSeq}{\{ \viewPointI \}_{i=n+1}^{n+N}}
\newcommand{\viewPath}{\sigma}
\newcommand{\viewSeqRef}{\{ \viewPointI \}_{i=1}^{N}}
\newcommand{\viewSphere}{D}

\newcommand{\imagePixel}{\mathbf{u}}
\newcommand{\imagePixelSet}{{U}}
\newcommand{\imagePixelTarget}{\mathbf{u}_a}
\newcommand{\imagePixelBack}{\mathbf{u}_b}
\newcommand{\imagePixelTargetSet}{\imagePixelSet_a}
\newcommand{\imagePixelBackSet}{\imagePixelSet_b}
\newcommand{\Bin}{V}
\newcommand{\BinTarget}{V_a}
\newcommand{\BinBack}{V_b}
\newcommand{\binMap}{\phi}

\newcommand{\aLikelihood}{h}
\newcommand{\liklihoodWeight}{w}

\newcommand{\histVal}{p}
\newcommand{\VR}{R(I_s)}

\newcommand{\Graph}{G_d}
\newcommand{\vertSet}{V}
\newcommand{\weight}{L}
\newcommand{\edgeSet}{E}
\newcommand{\distRelative}{r_d}
\newcommand{\distMax}{r_{max}}
\newcommand{\costDetectI}{L(\viewPointI|\predI)}

\newcommand{\viewPointInit}{\viewPoint_{c,0}}
\newcommand{\viewVelInit}{\dot{\viewPoint}_{c,0}}
\newcommand{\viewAccelInit}{\ddot{\viewPoint}_{c,0}}

\newcommand{\T}{\tau}
\newcommand{\yaw}{\psi}
\newcommand{\polyCoeff}{\mathbf{p}}
\newcommand{\polyOrder}{K}

\maketitle
\thispagestyle{empty}

%%%%%%%%%%%%%%%%%%%%%%%%%%%%%%%%%%%%%%%%%%%%%%%%%%%%%%%%%%%%%%%%%%%%%%%%%%%%%%%%
\begin{abstract}

This work investigates an efficient trajectory generation for chasing a dynamic target, which incorporates the detectability objective. The proposed method actively guides the motion of a cinematographer drone so that the color of a target is well-distinguished against the colors of the background in the view of the drone. 
For the objective, we define a measure of color detectability given a chasing path. After computing a discrete path optimized for the metric, we generate a dynamically feasible trajectory. The whole pipeline can be updated on-the-fly to respond to the motion of the target. For the efficient discrete path generation, we construct a directed acyclic graph (DAG) for which a topological sorting can be determined analytically without the depth-first search. The smooth path is obtained in quadratic programming (QP) framework. We validate the enhanced  performance of state-of-the-art object detection and tracking algorithms when the camera drone executes the trajectory obtained from the proposed method.          

\end{abstract}

%%%%%%%%%%%%%%%%%%%%%%%%%%%%%%%%%%%%%%%%%%%%%%%%%%%%%%%%%%%%%%%%%%%%%%%%%%%%%%%%
\section{INTRODUCTION}
% Aerial cinematography is popular
Aerial cinematography where a flying agent is employed to autonomously follow a dynamic target is an important application of a drone equipped with vision sensors. From video filming for personal usage to industrial inspection, micro aerial vehicles (MAVs) have been successfully employed, which has been spurred by developments in control \cite{neunert2016mpc}, planning \cite{ryll2019efficient}, localization\cite{qin2018vins} and object tracking \cite{lukezic2017discriminative} and detection\cite{redmon2016you}.    
% How important and challenging the object detection is for  Aerial cinematography 
In such tasks, a camera drone should detect the target (or actor) of interest and localize its state, in order to determine the motion for chasing \cite{bonatti2019toward}. Thus, the localization of target is one of the most crucial modules. However, dynamic object detection and tracking by MAVs is challenging due to  1) occlusion from obstacles, 2) motion blur from the movement of both object and the drone 3) color ambiguity with background. Addressing 1)-3) becomes more important in the cases where a drone films an actor at some distance apart due to safety.

% It is better to develop a motion strategy to overcome the difficulties 
 One option to circumvent the issues 1)-3) is to  actively utilize the motion of drone rather than entirely relying on the image-processing technique such as target detection \cite{redmon2016you} or tracking \cite{henriques2014high,lukezic2017discriminative}. 
% and there are multiple research 
Inspired by the idea, a group of recent works \cite{8967840,bs2020integrated,falanga2018pampc,katoch2019edge} seeks to enhance the performance of target identification assisted by the motion strategy. The previous works \cite{8967840,bs2020integrated} of the authors proposed a chasing trajectory to improve the target perception against occlusion from obstacles. Also, the proposed motion strategy includes the travel efficiency and safety jointly. On the other side, the works \cite{falanga2018pampc,katoch2019edge} also aim to enhance the perceptibility of target against the flight motion of camera drone. \cite{falanga2018pampc} leveraged model predictive control (MPC) to account for the motion efficiency and the perception objective so that the projected motion of a target on the image of a drone is minimized. In the case of \cite{katoch2019edge}, they analyzed the motion blur effect from the camera movement to alleviate blurring of the observed edges of a target of interest.
  
\begin{figure}[!t]
\centering
\includegraphics[width=0.4\textwidth]{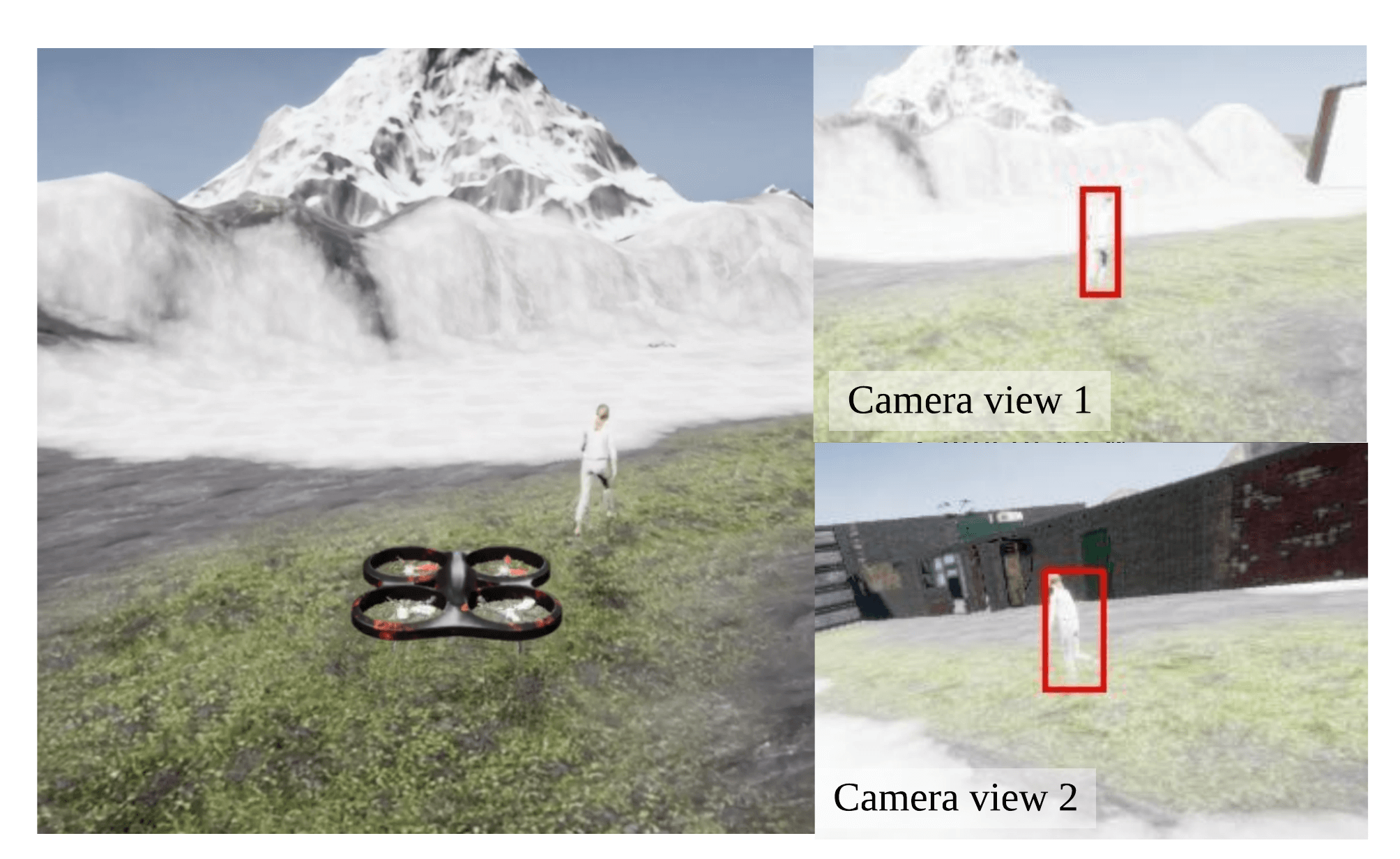}
\DeclareGraphicsExtensions.
\caption{\textbf{Left} : An illustration for autonomous chasing by cinematographer drone. Here, the drone is following the an actor with white clothes in a snow environment. \textbf{Rtight} : Image views from different observation bearing. The position of the actor is same for the two cases. }
\label{Fig_intro}
\end{figure}

\subsection{Target detectability in videography tasks}
% narrowing down 
 By guiding the movement of the camera drone, the mentioned works tried to improve the visual tracking performance focusing on occlusion and motion blur. 
 % importance of detectability in terms of tracker and detector 
  In addition to them, color distinguishability of an actor from the background is also an important factor in localizing the object in the view of a drone. Technically, distinct separability in a color space is advantageous for the detection and tracking methods where the core objective is to build a classifier for an object of interest from backgrounds \cite{henriques2014high,avidan2004support,li2019dsfd}. 
   % importance of detectability in human perception 
  At the same time, detectability is also important for viewers and aesthetic of cinematography (the terms detectability and distinguishability will be interchangably used to refer the color separability of the target and the background). For example, it might be undesirable to constantly keep an actor wearing a white clothes on a white background (compare the two camera views in the right column of \autoref{Fig_intro}).

  \subsection{Contributions}
% The detectability and dicriminiative analysis exist only in image processing or other area
  In the field of image processing, there have been rich studies to analyze and circumvent the color ambiguities \cite{collins2005online,kristan2007probabilistic,ueda2017color}. However, they deal with predetermined video footages, not an active motion strategy to acquire a better-detectable image sequence.
  % bridging the gap between image processing and motion planning
  In this work, we aim to bridge the gap between the research of motion planning of MAVs and resolving color ambiguity in image processing by presenting \textit{detectability aware receding horizon planner} (DA-RHP).  The proposed method generates a chasing path along which a target can be easily distinguishable from environments in terms of color. The contributions of our work can be summarized as the followings:  
 
 \begin{itemize}
    \item We propose a detectability score metric for the motion planner. The metric can quantify how clearly the target can be distinguished from background given a RGB-channeled image.
    \item We build an efficient preplanning process leveraging a graph search method. From construction of a directed acyclic graph to which topological sorting can be determined analytically, we can reduce the computational load in finding an optimal solution.      
    \item We test our algorithm validating the enhanced performance of well-known visual detection and tracking algorithms.
\end{itemize}
 
 To the best knowledge of the authors, there is few studies in motion planning which efforts to generate a trajectory for chasing a dynamic object to improve the detectability using color information of a target and an environment.

\section{Problem statement}
In this section, we put forward the assumptions and the aimed capabilities of DA-RHP.
% the spec of camera drone
Regarding a camera drone, we assume that the allowable velocity is larger than a dynamic object and the drone has a vision sensor which can compute the state of the actor \cite{3dtarget,bonatti2019toward}. For the ease of discussion, we assume the process is reliable enough once the object is successfully identified in the image of the drone. We will denote the pose observation of the target as $T_{a,i}\in SE(3)$ at the $i$th time step. 
% target assumption
 In this work, the future movement of target over a time window is to be predicted from observations. We write them as $\predSeq$ where $N$ is the number of discretization over a horizon.
% pcl of back and target is used as inputs 
Letting a RGB point cloud $\pcl =\{(\transl,\colorTriple)\; |\; \transl \in \RRR , \colorTriple \in \CC\} $ where $\CC$ is a set of RGB-triplets $(r,g,b)$, we assume that the appearance models of the actor and the background are available as $\Paa$ and $\Pb$ respectively. 

\begin{figure}[!t]
\centering
\includegraphics[width=0.35\textwidth]{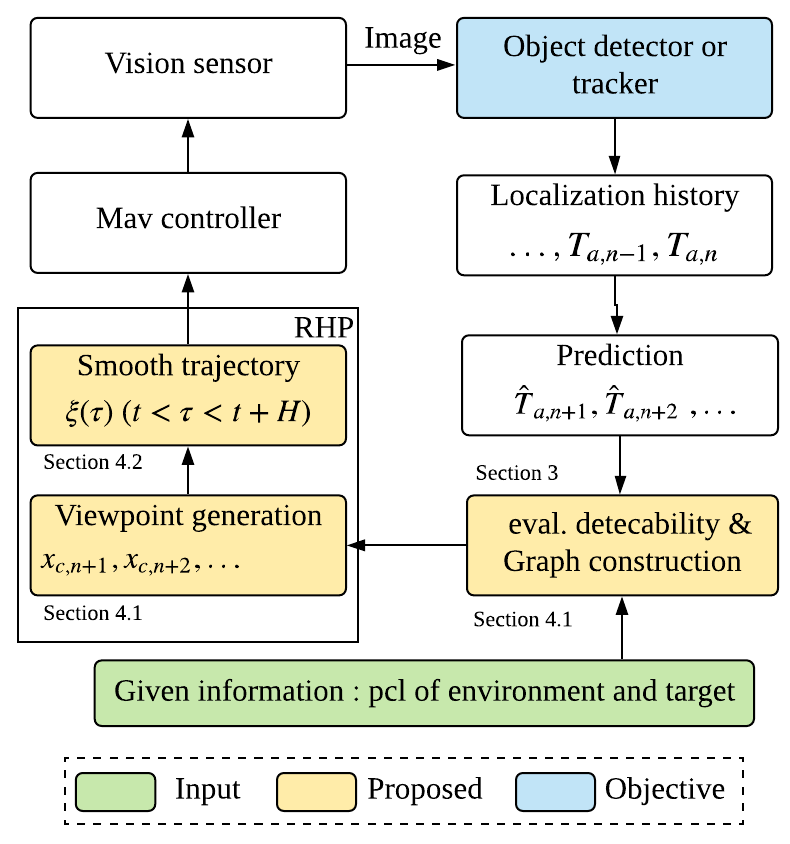}
\DeclareGraphicsExtensions.
\caption{The overall system description for the receding horizon planner. In the pipeline of the general autonomous target following framework, this work tackles the receding horizon planner (RHP) considering detectability given the RGB point cloud of background and target in order to enhance the performance of color-based detection and tracking algorithms.}
\label{fig_diagram}
\end{figure}

% Big picture
On top of the assumptions, we focus on an autonomous chasing framework as visualized in \autoref{fig_diagram}. On the drone side, it localizes and gathers the target observation from the visual information. Based on the observations, we predict the target motion and utilize it as an input of a receding horizon planner (RHP) for a local horizon. Then, RHP outputs a chasing trajectory as a control input for the MAV. The loop is continued until the end of the mission.   
% Objective 
 In the pipeline, we focus on an online chasing strategy which aims at achieving 1) enhanced performance of object detectors and trackers along with human perception, 2) travel efficiency with dynamic feasibility for the drone, and 3) an efficient computation to be used as RHP to rapidly respond the behavior of target. 

\subsection{Outline}
% Method overview 
To achieve the objectives, our planning strategy runs the three modules: detectability-evaluated graph generation, viewpoints optimization and continuous trajectory generation. 
% detectability-evaluated graph
We first consider a set of candidate viewpoints for a prediction $\predI$ and evaluate the target detectability with respect to the viewpoints based on a metric. We introduce the metric in the upcoming section.
% Viewpoint optimization 
Then,  \cref{sec: traj_gen_viewpoints} discusses how to compute a sequence of viewpoints  $\viewSeq$ given the prediction set $\predSeq$, which optimizes the translational distance and the color detectability metric based on a directed acyclic graph search. We will also explore how to reduce the complexity exploiting the structure of the graph.
% contiunous trajectory 
As the last step, a dynamically feasible trajectory is obtained using $\viewPointI$ as a skeleton, which is explained in \cref{sec: traj_gen_smooth}. 

%%%%%%%%%%%%%%%%%%%%%%%%%%%%%%% SECTION 3 %%%%%%%%%%%%%%%%%%%%%%%%%%%%%%%%%%%%%%%%

\section{Evaluation of target Detectability }
\label{sec: detect_metric}
% For one pose
In this section, we describe the evaluation of detectability given a target prediction $\pred$ captured in the image of the camera drone.
% Description
For the purpose, we first quantify the color separability between the foreground and background, given an image which has an object of interest. 
% Related work
In the field of image processing, the work \cite{collins2005online}  addressed this point in order to adaptively select a 1D feature space which best distinguishes between object and background. Based on the log-likelihood ratio of the two distributions on the foreground and background pixels, they measured the \textit{variance ratio} as an extension of Linear Discriminant Analysis (LDA) to incorporate the multi-modality of the 1D features of an image. 

% 3D space
While \cite{collins2005online} proposed a well-suited metric for color detectability of a target, the original work appears to have two shortcomings which necessitate modifications for our scenario. First, recent object trackers and detectors such as \cite{lukezic2017discriminative,rgbd,lan2018rgbI,redmon2016you} exploit multi-channeled feature space to improve accuracy, processing the multi-dimensional feature in real-time supported by the enhanced computing power. For those detection and tracking modules to be applicable to the camera drone, we will evaluate the color separability in the RGB space rather than limiting it into 1D feature space as \cite{collins2005online} did.    

% pixel proximity 
Second, the original method \cite{collins2005online} does not take spatial information of pixels into account when computing detectability, as only the color value of pixel is considered. As a result, the formulation in \cite{collins2005online} might associate high detectability with an image which includes a small part of background area which has similar colors with target even though the pixels are very close to the target. As concrete examples, \cite{collins2005online} gives a higher score to \autoref{fig_detection_comp}-(d) more than \autoref{fig_detection_comp}-(b) in terms of detectability although the overlapped region with background of the similar color is larger than \autoref{fig_detection_comp}-(b). In general, trackers such as \cite{henriques2014high,lukezic2017discriminative} perform template matching starting from the tracking result of the previous step. Thus, encoding the spatial proximity along with the color similarity is justifiable. We will consider not only the color but also the location of pixels in developing the detectability metric.

\subsection{Color detectability computation}

In order to define the detectability of the target projected in the image of the drone, we first consider a transformed pointcloud of the actor $\Paa$ with $\pred\in SE(3)$,  which can be written as
\begin{equation}
 \Paa(\pred) = \{(\pred \transl,\colorTriple)\; |\; (\transl,\colorTriple)\in P_a \}.   
\end{equation}
% image synthesis
Now, we can synthesize a projection image $I_s : \imagePixel \in \RR \rightarrow \colorTriple \in \CC $ from the union of $\Pb$ and $\Paa(\pred)$ given a camera pose and intrinsic parameters in a similar manner with \cite{falanga2018pampc}. In this work, $I_s$ is assumed as an image which has pixels representing both target and background. Examples of rendered images in this way are depicted in (a) and (b) of \autoref{fig_detection_eval}. Naturally, we can identify whether an image coordinate $\imagePixel$ was projected from $\Paa(\pred)$ or $\Pb$. We denote the set of $\imagePixel$ from $\Paa(\pred)$ and $\Pb$ as $\imagePixelTargetSet$, $\imagePixelBackSet \subset \RR$  respectively.        
% detectibility 
Now we are now ready to compute the detectability score of a rendered image $I_s$  extending the metric proposed in \cite{collins2005online}. 

% previous work
    
% modification
\begin{figure}[!t]
\centering
\includegraphics[width=0.48\textwidth]{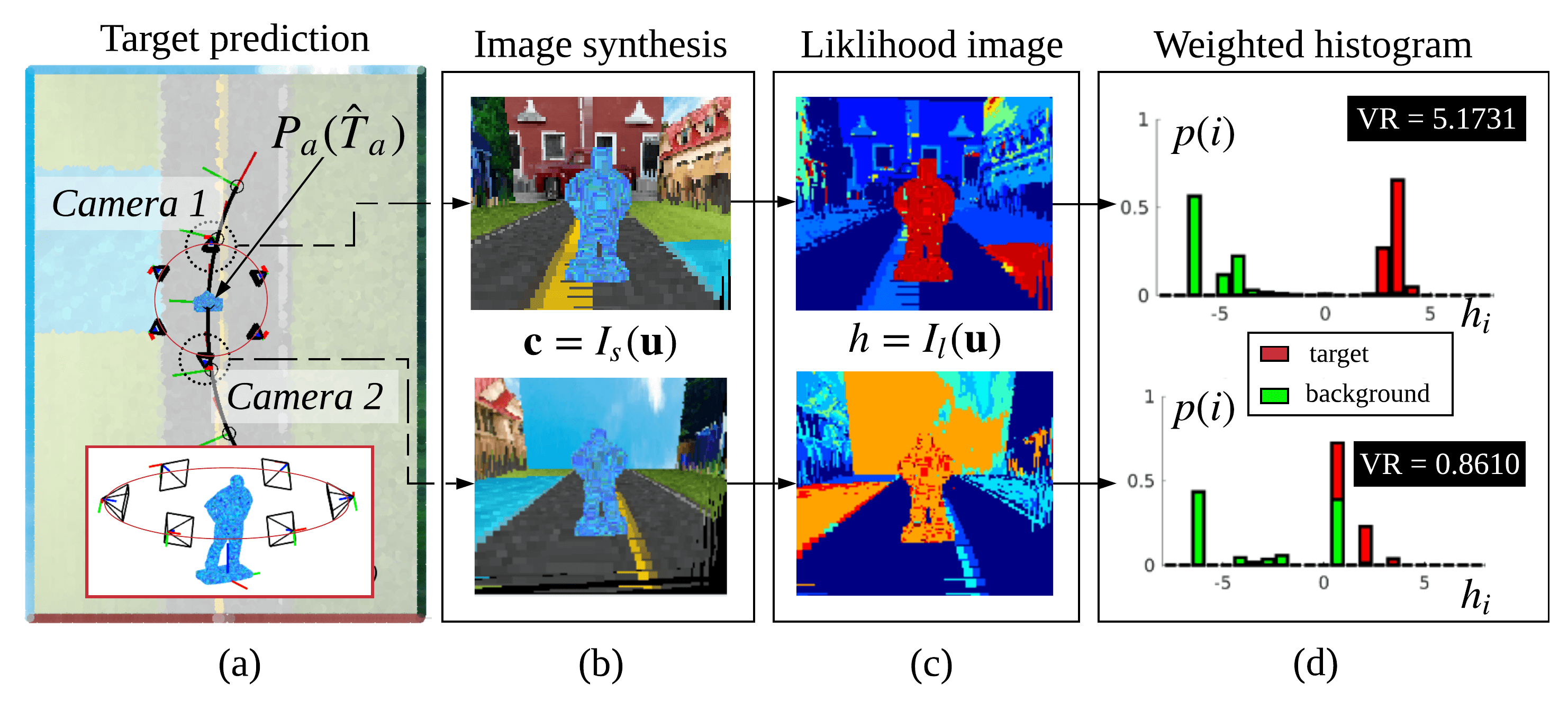}
\DeclareGraphicsExtensions.
\caption{Detectability evaluation process for a prediction pose of a target $\pred$. (a) A sky-blue coloured target $\Paa$ with two different cameras observing it. (b) Rendered RGB images $I_s$ of $\Paa(\pred)$ and $\Pb$ from the two cameras. (c) log-likelihood image $I_l$ for the two images. The mono-scaled image is shown in jet-colormap scale where red denotes high values and blue for low values. (d) Histograms of value in target pixels (red) and background pixels (green) for each $I_l$. It is noticed that the separability  between the two distribution $\histVal_a$ and $\histVal_b$ is more outstanding for the images from \textit{camera 1} than \textit{camera 2}. The values of variance ratio of the two images are stamped in the black boxes.}
\label{fig_detection_eval}
\end{figure}
% modification justification 
% RGB

We now proceed to create a RGB bin for the pixels $\imagePixel $ in a set $\imagePixelSet \in \RR$ of the $I_s$ by assigning a discretized bin to each pixel $\imagePixel \in \imagePixelSet $ based on the RGB-color value $\colorTriple = I_s(\imagePixel)$. The resultant RGB-binning of $\imagePixelSet$ defines a map $\binMap : \CC \rightarrow [0,1]$ which is written as

\begin{equation}
    \binMap(\colorTriple \mid \imagePixelSet) = \dfrac{|Q(\colorTriple \mid \imagePixelSet)|}{|\imagePixelSet|}
    \label{eqn_bin_map}
\end{equation}
where $|\cdot|$ is the cardinality of a set, and $Q(\colorTriple\mid \imagePixelSet)$ is a set of pixels in $\imagePixelSet$ which are grouped into the same bin with a query color $\colorTriple$. The map \eqref{eqn_bin_map} gives the density of the color value $\colorTriple \in \CC $ in the RGB bin created from pixels $U$. 
Now a mono-scaled image $I_l : \RR \rightarrow \R$ is defined as
\begin{equation}
    I_l(\imagePixel) = \log\dfrac{\max(\binMap(\colorTriple\mid\imagePixelTargetSet),\epsilon)}{\max(\binMap(\colorTriple\mid\imagePixelBackSet),\epsilon)},
    \label{eqn_likelihood_def}
\end{equation}
where $\colorTriple = I_s(\imagePixel)$ and $\epsilon$ is a small positive number to avoid numerical issues. Let us call $I_l$ as \textit{log-likelihood ratio image} or \textit{likelihood image} for the brevity. 
 Basically, $\aLikelihood = I_l(\imagePixel)$ converts RGB triplets $\colorTriple = I_s(\imagePixel)$ into a scalar $\aLikelihood$ which encodes a likelihood that a given color $\colorTriple$ belongs to the foreground pixels $\imagePixelTargetSet$ than background pixels $\imagePixelBackSet$. Examples of $I_l$ are illustrated in  \autoref{fig_detection_eval}-(c) for synthesized images in \autoref{fig_detection_eval}-(b). 
 
 We now move on to creation of two histograms (i.e. 1D bin) for $\imagePixelTargetSet$ and $\imagePixelBackSet$ in a likelihood image $I_l $. We write a normalized histogram value $\histVal(i)\in[0,1]$ for a log likelihood value $\aLikelihood_i\in\R$ of the $i$th bin, while representing the histograms of $\imagePixelTargetSet$ and $\imagePixelBackSet$ as $\histVal_{a}(i)$ and $\histVal_{b}(i)$ respectively. Regarding data counting during histogram creation, an unweighted histogram adds an equal amount to the corresponding bin while a \textit{weighted histogram} gives a weight when counting a data. Unlike \cite{collins2005online} where all the histograms $\histVal_{a}(i)$,\; $\histVal_{b}(i)$ were an unweighted ones, we build an unweighted histogram for $\imagePixelTargetSet$ and a weighted histogram for $\imagePixelBackSet$.  In  building a weighted histogram for background, we assign a weight $\liklihoodWeight(\imagePixel)$ to a likelihood value $\aLikelihood = I_l(\imagePixel)$ of $\imagePixel \in \imagePixelBackSet $, given by 
  
 \begin{equation}
 \liklihoodWeight(\imagePixel) =     
 \begin{cases}
\liklihoodWeight_{max}(1-\dfrac{d}{d_c})+\dfrac{d}{d_c}, & d \leq d_c \\
1. & d > d_c \\
\end{cases}
\label{eqn_weighting}
 \end{equation}
 where $d = \|\imagePixel - \overline{\imagePixel}_a \| $ and $\overline{\imagePixel}_a$ is the centroid of the target pixels $\imagePixelTargetSet$. That is, $\liklihoodWeight(\imagePixel)$ linearly increases in proportion to the proximity with the target pixels if $\imagePixel $ is within a boundary $d_c$. \autoref{fig_detection_eval}-(d) visualizes the histograms $\histVal_{a}(i)$ (red) and $\histVal_{b}(i)$ (green) from likelihood images.
 % Definition ...
  As the last phase to define the metric for detectability, we quantify the separability between the two distributions $\histVal_{a}$ and $\histVal_{b}$ by computing a variance ratio $\VR$ of \textit{between-class variance} to the sum of \textit{within-class variances} given by   
 \begin{equation}
     \VR = \dfrac{\mathrm{Var}(\aLikelihood;(\histVal_{a}+\histVal_{b})/2)}{\mathrm{Var}(\aLikelihood;\histVal_{a}) + \mathrm{Var}(\aLikelihood;\histVal_{b})}
     \label{eqn_VR}
 \end{equation}
 where $\mathrm{Var}(h;p)$ is a variance of likelihood $h$ with respect to the probability distribution from a histogram $p$. The metric \eqref{eqn_VR} captures how tightly the  likelihood values $\aLikelihood = I_l(\imagePixel)$ of the target and background pixels are clustered within each group (denominator), and how much the two are spread apart (numerator). As \eqref{eqn_VR} quantifies the separability of the target-likelihood distributions of target pixels and background pixels, we utilize \eqref{eqn_VR} as a score for detectability. The raw values $\VR$ of the two images from \textit{camera 1} and \textit{camera 2} are stamped in \autoref{fig_detection_eval}-(d) in the black-labelled boxes. We can notice that the high-scored camera view in \autoref{fig_detection_eval} (camera 1) has more distinct background color against the target, which is also reflected as the large separation between the histograms of target and background.

\subsection{Discussion of the {proposed metric} }

\begin{figure}[!t]
\centering
\includegraphics[width=0.5\textwidth]{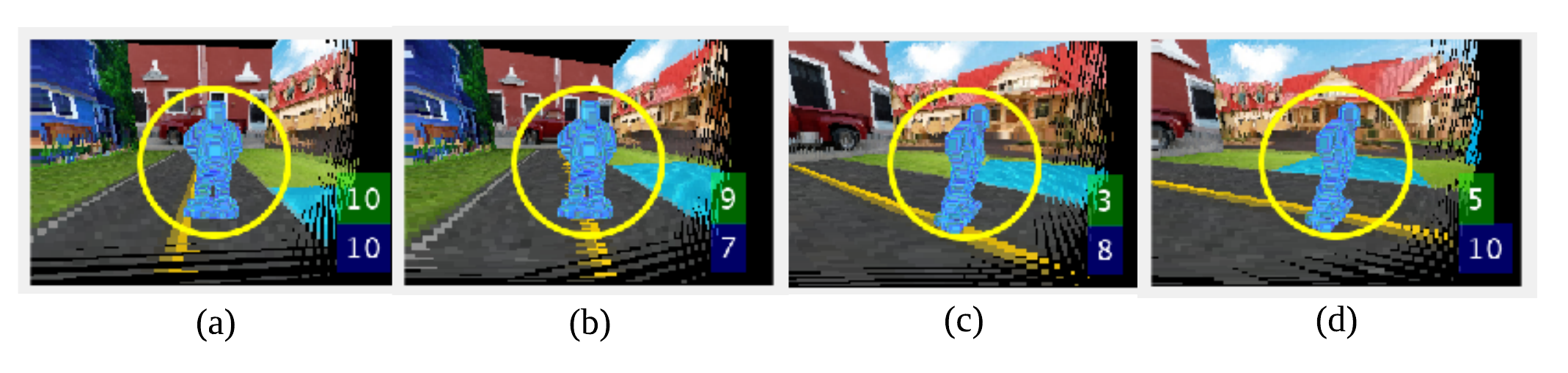}
\DeclareGraphicsExtensions.
\caption{The evaluations of detectability with $\VR$ for several footages of the scenario \autoref{fig_detection_eval}-(a). We linearly scaled the value of $\VR$ to $[1,10]$ for 4 cases (a)-(d) so that (a) has the highest score $10$. The green labels with numbers show the score using weighted histogram proposed in this work while the blue labels shows scores acquired from \cite{collins2005online}. }
\label{fig_detection_comp}
\end{figure} 

In developing the metric for target detectability based on the distributions of pixels of the background and the foreground, {there were two modifications from \cite{collins2005online}.} First, we have utilized the full-color information of pixels by creating a three-dimensional bin when computing the log-likelihood ratio. From this, we were able to define separability in RGB color space which is frequently utilized as the raw input in the state-of-the-art algorithms for detection and tracking \cite{li2019dsfd,redmon2018yolov3,lukezic2017discriminative}. {Second,} we were able to exploit the spatial information of pixels in \eqref{eqn_weighting} by building a weighted histogram for the background pixels. Examples of evaluations of $\VR$ of ours versus \cite{collins2005online} are compared in \autoref{fig_detection_comp}. Both methods give the highest score for \autoref{fig_detection_comp}-(a). However, \cite{collins2005online} gives a high score for \autoref{fig_detection_comp}-(d) as the image has only small amount of dubious colors despite of their close proximity to the target pixels. In contrast, the weighted histogram proposed in this work gives the lower scores to \autoref{fig_detection_comp}-(c) and (d) considering the locations of the pixels of similar colors.          

\section{detection-aware trajectory generation}
\label{sec: traj_gen}
     
\begin{figure}[!t]
\centering
\includegraphics[width=0.47\textwidth]{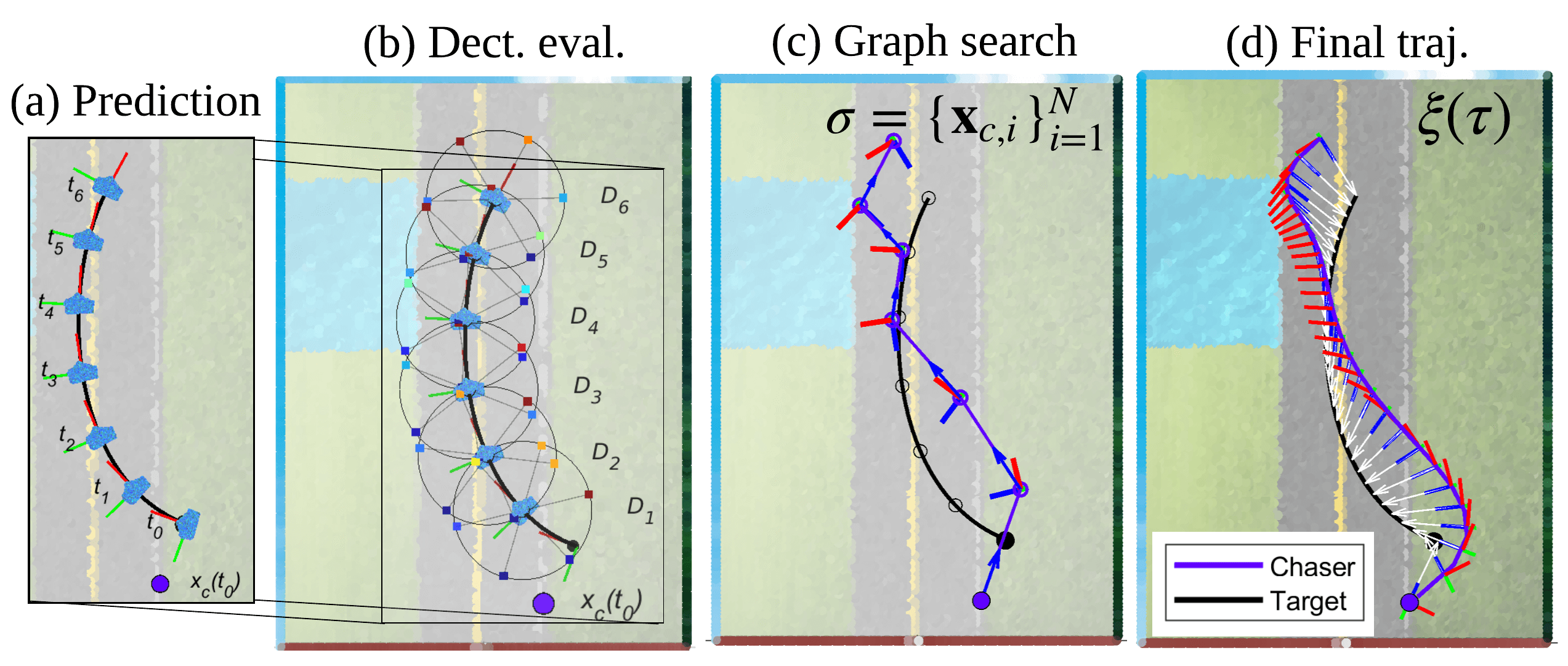}
\DeclareGraphicsExtensions.
\caption{Trajectory generation process. (a) A six-step prediction sequence for the blue-coloured target with the initial location of chaser $\viewPoint_{c,0}$ (purple circle)   (b) For each prediction, the view sphere $D_i$ is composed of six candidate viewpoints. The color of each viewpoint shows the scale of detectability score with jet-colormap. We build a DAG on top of the set of viewpoints having $x_c(t_0)$ as a root. (c) The graph search result for discrete viewpoints at each time step. (d) The final trajectory. The optical axis (z) and x-axis of drone-body coordinate are stamped with bearing vector (white arrow).}
\label{fig_planning}
\end{figure} 

\subsection{Computation of optimal viewpoints}
\label{sec: traj_gen_viewpoints}
In the previous section, we defined a detectability metric as the variance ratio \eqref{eqn_VR} starting from a synthesized image $I_s$ given a camera viewpoint. Making use of the scoring metric $\VR$ and the initial position of chaser $\viewPoint_{c,0} = \viewPoint_c(t_0)\in \RRR $, we explain how to plan discrete view points $\viewPath = \viewSeqRef  \; (\viewPointI \in \RRR)$ for a prediction sequence $\predSeqRef$ along which the accumulative detectability is maximized while the total travel is reduced. From this, we will use $\viewPointI$ as a planned position for the chaser at time $t_i$ with the optical axis toward a target prediction  $\predI = \pred(t_i)$. 

As a requirement of $\viewPath$, we constrain the Euclidean distance of the predicted target and the chaser with $\distRelative \in \R^{+}$. Also, we bound the inter-distance of points $\viewPointI,\viewPointII$ at steps $t_i$ and $t_i + \Delta t$ below $\distMax$, assuming that the velocity larger than $\distRelative/\Delta t$ is undesirable. Coupling the two constraints with the detectability objective, we formulate an optimization as below. 

\begin{equation}
\begin{aligned}
    & \underset{{\viewPath}}{\text{min}}
    &&  \sum_{i=0}^{N-1} \underbrace{\| \viewPointI - \viewPointI{}_{+1}\|}_{\mathrm{distance}} + \lambda\sum_{i=1}^{N} \underbrace{\costDetectI}_{\mathrm{detection}}  \\
    & \text{subject to} & &\text{$\| \viewPointI - \predPointI \| = \distRelative $} \\ 
    & & & \| \viewPointI - \viewPointI{}_{+1} \| \leq \distMax ,    
\end{aligned}
\label{eqn_discrete_opt}
\end{equation}
where $\costDetectI$ is a positive-valued cost function which penalizes a low value of $\VR$ where $I_s$ is the rendered image of $\Paa(\predI)$ by the camera with center  $\viewPointI$ and optical axis $\predPointI - \viewPointI $ (we represented $\predPointI \in \RRR $ as a translation of $\predI$). $\lambda$ is the importance weight for detectability. Instead of searching $\viewPointI$ on the entire half-sphere with a center $\predPointI$ and radius $\distRelative$ to solve \eqref{eqn_discrete_opt}, we compute $\viewPointI$ in a finite set $\viewSphere_i$ which is a  set of discretized points satisfying $ \| \transl - \predPointI \| = \distRelative$ (see the surrounding cameras in \autoref{fig_detection_eval}-(a) as an example). 

Now we consider a directed acylic graph (DAG) $\Graph = (\vertSet,\edgeSet)$ to find an optimal $\viewPath = \viewSeqRef$ on a sequence of the finite set $\viewPointI \in \viewSphere_i$. Setting $\vertSet_0 =\viewSphere_0 =\{\viewPoint_{c,0}\}$, we define the set of vertices $\vertSet$ as 
\begin{equation}
    \vertSet = \bigcup_{i=0}^{N} \vertSet_i  \;,\; \vertSet_i \subset \viewSphere_i
\end{equation}
where $\vertSet_i$ is a subset of $\viewSphere_i$. The graph $\Graph$ is constructed as follows. First, we initialize $\vertSet_i = \viewSphere_i$. Then, we wire two nodes $\viewPointI\in\viewSphere_i$ and $\viewPoint_{c,i-1}\in\viewSphere_{i-1}$ as a directed edge $e= (\viewPoint_{c,i-1},\viewPointI)$ if the condition $\|\viewPoint_{c,i-1} - \viewPointI \|\leq \distMax$ is met. Then, we collect the edge to $\edgeSet$. After all the admissible edges are found, $\vertSet_i$ is reset to have only reachable elements $\viewPointI\in \viewSphere_i$ from the root $\viewPoint_{c,0}$. A structure of $\Graph$ for the case \autoref{fig_planning}-(a) is visualized in \autoref{fig_graph}. We now define an weight $\weight_i$ for $e= (\viewPointI{}_{-1},\viewPointI)$ as 

\begin{equation}
    \weight_i =  \| \viewPointI{}_{-1} - \viewPointI  \| + \lambda\costDetectI.
\end{equation}
Thus, applying a graph-search method for $G$ becomes equivalent to solving \eqref{eqn_discrete_opt}. 
% graph solve ?
Performing a graph-search on a general DAG  is composed of two steps: 1) \textit{topological sorting} and 2) \textit{edge relaxation} \cite{jungnickel2005graphs}. Here, we introduce the following definitions before discussing the topological sorting for the graph $\Graph$.    

\begin{figure}[!t]
\centering
\includegraphics[width=0.4\textwidth]{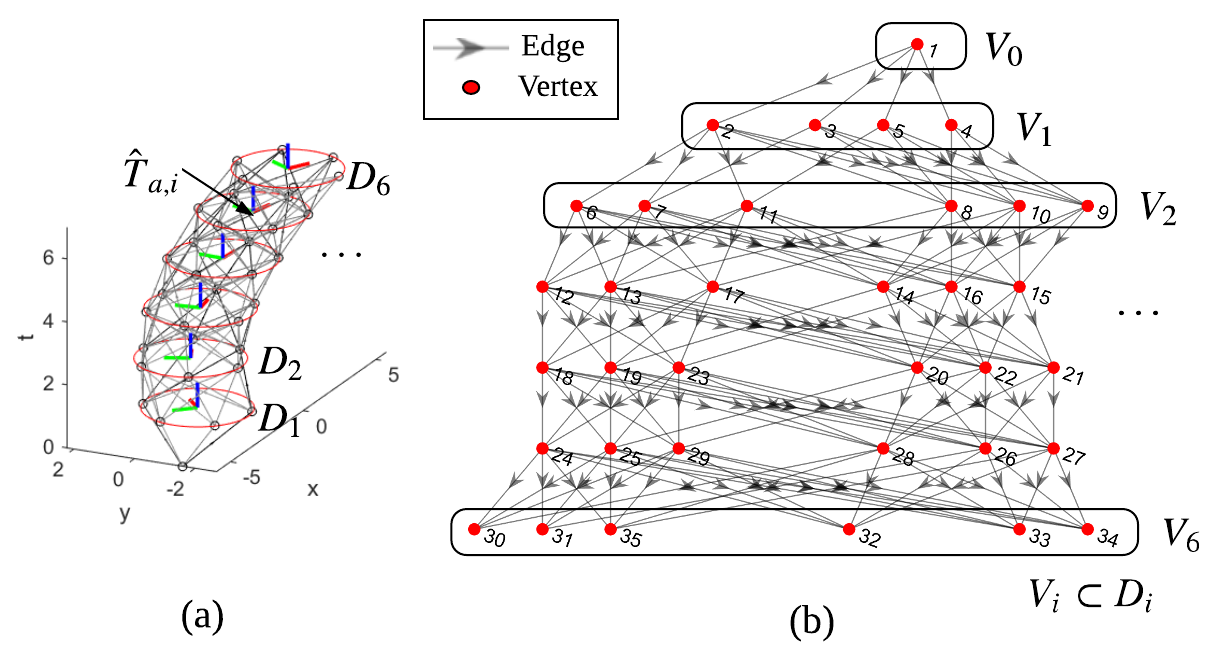}
\DeclareGraphicsExtensions.
\caption{(a) An illustration of graph construction from view spheres of \autoref{fig_planning}-(b). Due to the connection limit $\distMax$, we have two unreachable points in $D_1$ from the root $\viewPoint_{c,0}$ resulting in only 4 vertices in $V_1$. (b) The structure of DAG made from (a). We perform a graph search to find an optimal sequence from the root to all the possible destinations in $V_6$.  }
\label{fig_graph}
\end{figure} 
\begin{definition}
\textit{A sorting $S$ for a set $A$ is defined as a bijective function $S: A \rightarrow \{n|1\leq n \leq |A|\}$.}
\end{definition}
\begin{definition}
\textit{A topological sorting for a directed acyclic graph $G=(V,E)$ is a sorting $S_T: V \rightarrow \{n|1\leq n \leq |V|\}$ satisfying $S_T(u)<S_T(v)$ given any edge $e=(u,v)\in E$ where $u,v\in V$.}  
\end{definition}
The time complexity required for topological sorting in a general DAG is $O(|\vertSet|+|\edgeSet|)$ when using depth-first search DFS with an extra stack \cite{cormen2009introduction}. In the case of $\Graph$, however, we can directly determine a topological sorting $S_T$. For the enumeration purpose,  we represent $\vertSet_i$ as $ \{\viewPointI^{j}\}_{j=1}^{n_i}$ where $n_i$ is cardinality of $\vertSet_i$. We determine the sorted index for a vertex  $\viewPointI^{j}$ as below:

 \begin{equation}
 S_T (\viewPointI^{j}) =     
 \begin{cases}
\sum_{k=0}^{i-1}n_k +j & i>0, \\
1 & i = 0. \\
\end{cases}
\label{eqn_sorting}
 \end{equation}
 Now we show \eqref{eqn_sorting} is a topological sorting by \textbf{Proposition 1}.
\begin{proposition}
\textit{The sorting of \eqref{eqn_sorting} is a topological sorting for $\Graph$.}
\label{propose1}
\end{proposition}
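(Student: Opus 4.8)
The plan is to verify the two defining properties in turn: first that $S_T$ from \eqref{eqn_sorting} is a valid sorting, i.e.\ a bijection onto $\{1,\dots,|\vertSet|\}$ in the sense of the sorting definition, and then that it respects edge orientation in the sense of the topological-sorting definition. The whole argument hinges on a single structural observation about $\Graph$: by the wiring rule, every directed edge joins a vertex of layer $i-1$ to a vertex of layer $i$, so that for any $e=(u,v)\in\edgeSet$ there is an index $i\geq 1$ with $u\in\vertSet_{i-1}$ and $v\in\vertSet_i$. I would establish this directly from the construction, which only ever creates an edge $(\viewPoint_{c,i-1},\viewPointI)$ when $\|\viewPoint_{c,i-1}-\viewPointI\|\leq\distMax$; such edges connect only the consecutive layers $i-1$ and $i$ and always point in the direction of increasing layer index.

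For the bijection claim, I would observe that \eqref{eqn_sorting} assigns to the $n_i$ vertices of layer $i$ exactly the consecutive block of indices $\{\sum_{k=0}^{i-1}n_k+1,\dots,\sum_{k=0}^{i}n_k\}$, where for $i=0$ the empty sum recovers $S_T=1$ (recall $n_0=1$ since $\vertSet_0=\{\viewPoint_{c,0}\}$). Within layer $i$ the map $j\mapsto\sum_{k=0}^{i-1}n_k+j$ is injective, the blocks for distinct layers are pairwise disjoint, and their union is precisely $\{1,\dots,\sum_{k=0}^{N}n_k\}=\{1,\dots,|\vertSet|\}$. Hence $S_T$ is a bijection onto $\{1,\dots,|\vertSet|\}$, which is what the sorting definition requires.

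For the topological property, I would take an arbitrary edge $e=(u,v)$ and use the structural observation to fix $i\geq 1$ with $u\in\vertSet_{i-1}$ and $v\in\vertSet_i$. By the block computation above, every index assigned within layer $i-1$ is at most $\sum_{k=0}^{i-1}n_k$, while every index assigned within layer $i$ is at least $\sum_{k=0}^{i-1}n_k+1$. Therefore $S_T(u)\leq\sum_{k=0}^{i-1}n_k<\sum_{k=0}^{i-1}n_k+1\leq S_T(v)$, giving $S_T(u)<S_T(v)$, exactly the condition in the definition of a topological sorting.

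The arithmetic here is elementary, so I do not expect a genuine obstacle beyond bookkeeping the partial sums. The only step that must be pinned down carefully is the structural claim that edges never skip a layer and always point from the lower-indexed layer to the higher one; once that is granted, the consecutiveness and disjointness of the index blocks force the ordering automatically, and no depth-first traversal is needed.
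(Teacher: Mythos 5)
Your proof is correct and follows essentially the same route as the paper's: both rest on the observation that every edge joins consecutive layers and that \eqref{eqn_sorting} assigns each layer a consecutive block of indices, so the source's index is bounded above by the partial sum that bounds the target's index from below. The only difference is that you also verify explicitly that $S_T$ is a bijection onto $\{1,\dots,|\vertSet|\}$, a point the paper's proof takes for granted; this is a harmless and slightly more complete rendering of the same argument.
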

\begin{proof}
Following the wiring process mentioned above, every edge $e\in \edgeSet$ can be written as $e = (\viewPointI^{j_1},\viewPointII^{j_2})$ where $\viewPointI^{j_1} \in \vertSet_i$, $\viewPointII^{j_2} \in \vertSet_{i+1}$ and $1\leq j_1 \leq n_i,\; 1\leq j_2 \leq n_{i+1}$. 

From this, $\sum_{k=0}^{i-1}n_k < S_T (\viewPointI^{j_1}) \leq \sum_{k=0}^{i}n_k$ and $\sum_{k=0}^{i}n_k < S_T (\viewPointI^{j_2}) \leq \sum_{k=0}^{i+1}n_k$ hold. As $S_T (\viewPointI^{j_1})<S_T (\viewPointI^{j_2})$ is satisfied for every edge, $S_T$ is  a topological sorting for $\Graph$.          
\end{proof}
This decreases the computation time by sparing us the need for an extra algorithm for topological sorting. Based on the vertex enumeration from \eqref{eqn_sorting}, a dynamic programming \cite{cormen2009introduction} can compute the shortest path to every node of $\Graph$ from the root node using only edge relaxation step whose complexity is linear to the number of edges.  Thus, a discrete path $\viewPath = \viewSeqRef$ which optimizes \eqref{eqn_discrete_opt} can be found by identifying the minimum cost among all the pairs ($\viewPoint_{c,0}$,$\viewPoint_{c,N}^{j}$) from the root to vertices in $\vertSet_N$. An example of $\Graph$ is visualized in \autoref{fig_planning}-(c), which was generated from a prediction $\predSeqRef$ and corresponding view sphere $\{\viewSphere_i\}_{i=1}^{N}$ of \autoref{fig_planning}-(a) and (b). To examine how the deterministic sorting $\eqref{eqn_sorting}$ can reduce the computation time for the overall graph search process for $\Graph$, we compared our implementation with a Matlab built-in function which performs the topological sorting equipped with additional search algorithm. The result is as shown in \autoref{fig_graph_comp}. 

\begin{figure}[!h]
\centering
\includegraphics[width=0.5\textwidth]{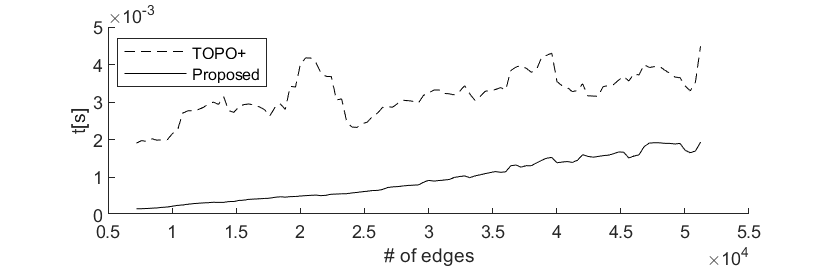}
\DeclareGraphicsExtensions.
\caption{{Computation time to solve \textit{the single-source-shortest-path (SSSP)} for the graphs of the same structure with $\Graph$ by increasing the number of edges. The dashed line denotes the result of applying MATLAB \texttt{shortestpathtree} function to our the graphs, which performs topological sorting using a search-based method. The thick black line shows the result when the sorting process was replaced with \eqref{eqn_sorting}.}}
\label{fig_graph_comp}
\end{figure}

\subsection{Dynamically feasible trajectory for drones}
\label{sec: traj_gen_smooth}

The formulation \eqref{eqn_discrete_opt} and the graph-search have planned a detectability enhanced viewpoint $\viewPointI$ at every time step $t_i$, assuming that target's pose will be $\predI$. Based on the initial sketch $\viewPath = \viewSeqRef$ and its initial state $\viewPointInit,\; \viewVelInit$ and $\viewAccelInit$, we will finalize a continuous trajectory consisting of position and yaw $ \mathbf{\xi}(\T) = [x(\T)\; y(\T)\; z(\T)\; \psi (\T)]^{T} \in \RRRR $ of the camera drone for $t_0 \leq \T \leq t_N $. For the position $\transl_{c}(\tau) = [x(\T)\; y(\T)\; z(\T)]^{T} \in \RRR$, we represent the trajectory with a polynomial spline curve

\begin{equation}
\viewPoint_c(\T) =  
\begin{cases}
 \sum_{k=0}^{\polyOrder}\polyCoeff_{1,k}\tau^k & (t_{0} \leq \tau < t_{1}) \\
 \sum_{k=0}^{\polyOrder}\polyCoeff_{2,k}\tau^k & (t_{1} \leq \tau < t_{2}) \\
...&\\
 \sum_{k=0}^{\polyOrder}\polyCoeff_{N,k}\tau^k & (t_{N-1} \leq \tau < t_{N}) \\
\end{cases}
\end{equation}
where $\polyCoeff_{i,k} \in \RRR $ denotes the polynomial coefficient for order $k$ defined over time segment $[t_i,t_{i+1})$. We want to calculate the polynomials which effort to pass through the viewpoints $\viewPointI$ at time $t_i$ while decreasing the magnitude of the high-order derivative for smooth transition. For the purposes, an optimization can be formulated as
\begin{equation}
\label{eqn_cont opti}
\begin{aligned}
& \underset{}{\text{min}}
&& \int_{t_{0}}^{t_{N}} {\lVert{{\transl_c}^{(3)}(\tau)}\rVert}^2 d\tau \:+\: \rho \sum_{i=1}^{N}{\lVert{\transl_c(t_{i})}-\viewPointI\rVert}^2 \\ 
& \text{subject to} & &  \transl_c(t_{0}) = \viewPointInit  \\
& & &   \dot{\transl}_c(t_{0}) = \viewVelInit\\
& & &   \ddot{\transl}_c(t_{0}) = \viewAccelInit
\end{aligned}
\end{equation}

This can be converted to a quadratic programming with respect to $\polyCoeff_{i,k}$ in a similar way with \cite{chen2016tracking}, which can be solved efficiently using the algorithms such as interior point  \cite{mehrotra1992implementation}. \autoref{fig_planning}-(d) demonstrates a smooth curve obtained from the viewpoints in \autoref{fig_planning}-(c). Regarding the trajectory of $\yaw(\T) \in \R$, we decide it by heading the optical axis of drone to the actual center of the actor in a myopic way. Based on the dynamics and the differential-flatness of a quadrotor model, the trajectory $ \mathbf{\xi}(\T) = [x(\T)\; y(\T)\; z(\T)\; \psi (\T)]^{T}$ can be executable with dynamic-feasibility for MAVs within the actuation limits \cite{mellinger2011minimum}. $\mathbf{\xi}(\T)$ is fed into the controller of the drone until a prediction is reliable or the defined period expires (see the final input to MAV controller in \autoref{fig_diagram}). 

 Up to now, we have explained the two steps to generate a detection-aware chasing trajectory: 1) detectability-optimized viewpoints generation and 2) smooth trajectory generation. We now put them together to build DA-RHP as summarized in \textbf{Algorithm 1}. 

\begin{algorithm}
\DontPrintSemicolon
\SetAlgoLined
\SetKwInOut{Input}{Input}\SetKwInOut{Output}{Output}
\SetKwInOut{given}{Input}
\SetKwInOut{param}{Parameter}
\SetKwInOut{initialize}{Initialize}
\SetKwFunction{DAGSearch}{DAGSearch}
\SetKwFunction{smoothTraj}{smoothTraj}
\SetKwFunction{detectabilityEval}{detectabilityEval}
\SetKwFunction{predict}{Predict}
\SetKwFunction{viewsphere}{ViewSphere}
\given{Pointclouds $\Paa$, $\Pb$. Mission time $[t_s,t_f]$}
\initialize{Accumulated prediction error; $\texttt{accumErr}=0$}
\For{$t=t_s$  \KwTo $t_f$}{ 
        \If{$\texttt{accumErr}>\epsilon$}{
            Set window $\tau\in[t,t+H]$, discretization $\{t_n,t_{n+1},...,t_{n+N}\}$ and  $\transl_{c,n}=\transl_{c}(t)$. \\
            $\predSeq$ = \predict{$\{t_i\}_{i=n+1}^{n+N}$}\\ 
            Eval. detect. for \viewsphere{$\predI$}\\ 
            $\{\viewPoint_{c,n+1},\viewPoint_{c,n+2},...,\viewPoint_{c,n+N}\}$=\DAGSearch() \\
            $\mathbf{\xi}(\T)$ = \smoothTraj($\{\viewPoint_{c,i}\}_{i=n}^{n+N}$) 
            $\texttt{accumErr}=0$            
        }
        update $\texttt{accumErr}$ \\ 
        exec. $\mathbf{\xi}(t)$
}
\caption{DA-RHP}
\end{algorithm}

\section{Validations}
In this section, we validate the proposed planner DA-RHP by comparing it with a plain chasing strategy in a simulated environment. Several key results are presented including computation time, travel distance and the performance of visual identification of a target when using correlation filter-based trackers \cite{henriques2014high,lukezic2017discriminative} and learning-based object detection methods \cite{redmon2018yolov3,li2019dsfd} for the output video footage taken from the drone. 
\subsection{Experimental setup}
% Method 
We performed high-fidelity simulations in \textit{Unreal engine} with Airsim plugin \cite{shah2018airsim} where a drone with vision sensors is supposed to  autonomously follow a walking actor with white clothes. In the environment, there are multiple piles of snow which can interrupt the visual tracking of the actor as shown in \autoref{Fig_intro}. The actor is set to walk with the maximum speed 1.3 \si{\metre}/\si{\second} along the path shown as ruby curve in the first column of \autoref{fig_simulation}. We use constant velocity model for prediction over a short prediction horizon. The drone was mounted with a firmly attached camera having 120\si{\degree} field-of-view.   The whole algorithm in \textbf{Algorithm 1} was written in C++, and ROS was used to operate the simulated drone via px4 SITL. Additionally, we applied QPOASES to compute \eqref{eqn_cont opti}. All the computations including running the simulator were performed in a computer with CPU Intel i7-6700K CPU @ 4.00GHz and RAM 16GB. 

\begin{figure*}[t]
 \centering
  \includegraphics[width=0.95\textwidth]{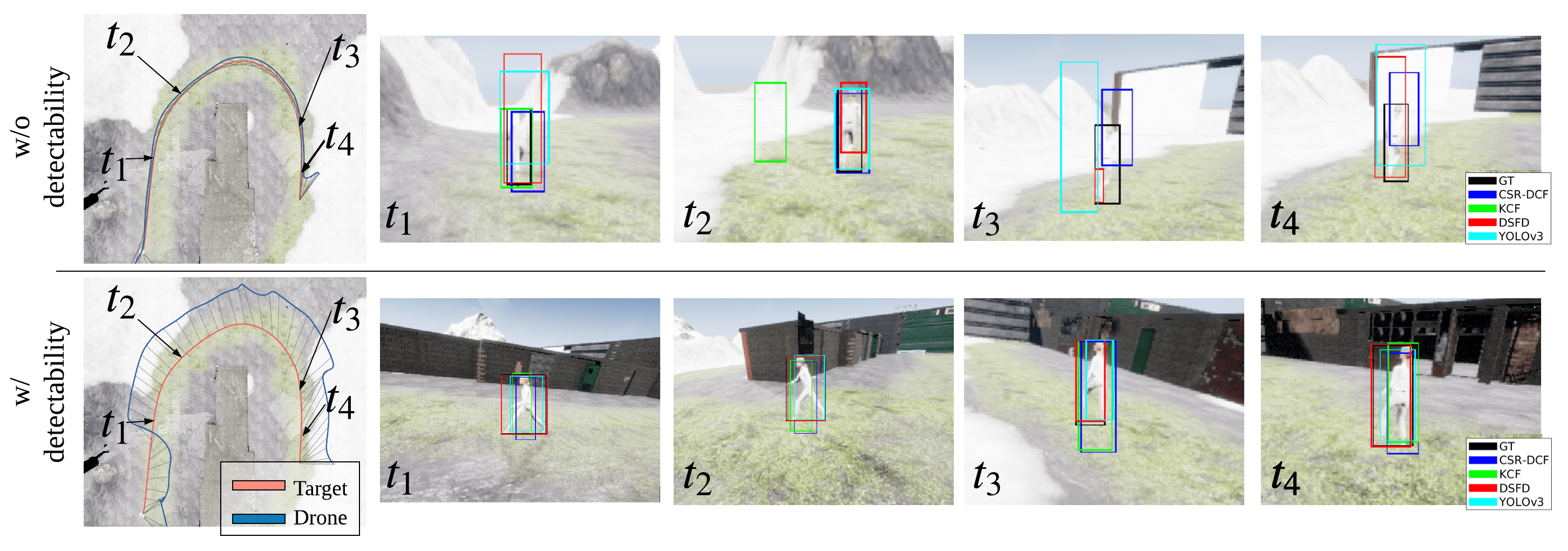}
  \caption{The positional histories (left) and video footage ($t_1 ,..,t_4$) of the two motion strategies. In the image sequence, the detection boxes for the actor are visualized for $4$ different algorithm plus the ground truth.  \textbf{Top}: A plain-chasing strategy. \textbf{Bottom}: The proposed DA-RHP.    }
\label{fig_simulation}
\end{figure*}

\begin{figure}[h]
 \centering
  \includegraphics[width=0.45\textwidth]{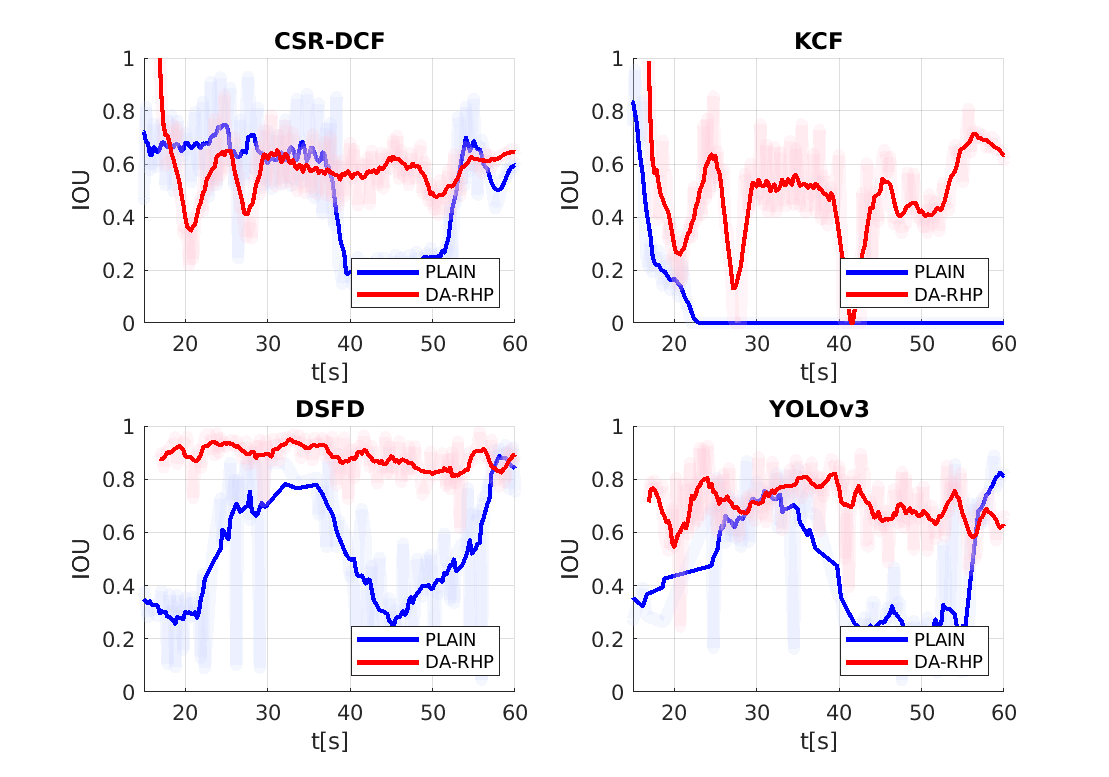}
  \caption{{The IOU (intersection over union) results from the simulation with two tracking algorithms (CSR-DCF, KCF) and two detection algorithms (DSFD, YOLOv3). For each algorithm, the IOU is computed for the footage taken along the trajectory generated from either the proposed (red) or plain receding horizon planner (blue).} }
\label{fig_iou_tracker}
\end{figure}

\begin{figure}[h]
 \centering
  \includegraphics[width=0.45\textwidth]{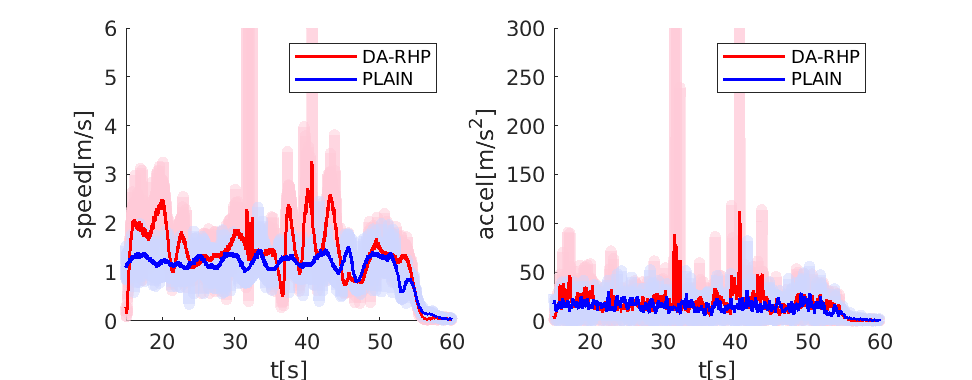}
  \caption{(a) The histories of speed of the drone during the simulations when the drone executes DA-RHP (red) and plain-RHP (blue). (b) Norm of acceleration  results in the simulation.}
\label{fig_vel_accel}
\end{figure}

% simulation description 
Based on them, we perform two simulations. One is a plain chasing planner without detectability consideration and the other is DA-RHP. The only difference is that the former sets the detection importance $\lambda$ to zero while the latter to $20$ in \eqref{eqn_discrete_opt}. Other than that, the same parameters were applied: the time horizon $H=4$ and time discretization $N=4$ were used, while the number of elements in the view-sphere was set to $8$. For the smooth trajectory, the order of polynomial spline was set to $K=5$. The observation distance was chosen as $\distRelative = 5$ \si{\meter}. Around $200,000$ points were included in $\Pb$ for the environment in \autoref{fig_simulation}, and the total duration of the mission is $60$ \si{\second}. 
From the two simulations, we compare the translation histories and the sequences of images from the drone (see \autoref{fig_simulation}). 
 In the simulations,  we fed the current pose information of the target to the follower drone rather than calculating the target's pose from the images. This was to observe how the receding horizon planners operate for the entire duration, not to be disturbed the tracking failure. In fact, without such setup, the planner without detectability consideration will malfunction due to tracking failures. 

\subsection{Results}

% Color
First, we assess the performance of target detectability of DA-RHP based on the two object detection (DSFD \cite{li2019dsfd}, YOLOv3 \cite{redmon2018yolov3})  and two tracking algorithms (CSR-DCF \cite{lukezic2017discriminative}, KCF \cite{henriques2014high}).   
% Intro
To measure the detection accuracy for the four algorithms, the history of IOU (intersection over union) is analyzed as shown in \autoref{fig_iou_tracker} and \autoref{Table:params}. Additionally, the average precision (AP) of the neural networks is presented in the case of detection algorithms as analyzed in the literature such as \cite{redmon2018yolov3,li2019dsfd}.

% Tracker 
For both trackers, the tracking was improved when the target was filmed from DA-RHP, resulting in longer duration with reliable IOU ($\geq 0.4$) than the plain RHP. In the case of the plain RHP, KCF tracker and CSR-DCF started to lose the accuracy from $t_2$ and $t_3$ respectively as shown in \autoref{fig_simulation} when they encountered the snow background. In contrast, DA-RHP took a detour to maintain the actor in front of the brick walls avoiding snow backgrounds. The averages of IOU are summarized in \autoref{Table:params}. 

% Detection 
To validate our algorithm with detection algorithms YOLOv3 and DSFD, we applied the deep neural networks  originally designed for multi-object classification to a single-class setting for our target detection scenario. With supervised learning, each network was trained with $500$ RGB images taken from the drone observing the actor at various locations and bearing direction. This was intended to to mimic a human perception test where a human subject is shown many figures of the actor and requested to segment the actor from the two different video footage from the cinematographer drone. The training set also included highly-ambiguous footages. Then, we tested the footages from the drone during the simulations. To record IOU, we included only the images throttled by the network output confidence $0.5$. The results show the averaged IOU of DA-RHP was higher than the plain strategy as noted in \autoref{Table:params}. 

% other things to be analyzed
Regarding the length of the chasing path, DA-RHP traveled 18 \si{\meter} longer than the plain chasing planner while giving more bearing views advantageous for detectability performance. 
The average computation time was $220$ \si{\milli\second}, $1$ \si{\milli\second} and $3$ \si{\milli\second} for detectability evaluation, graph-search and quadratic programming respectively. The total pipeline ran at $4$-$5$ \si{\Hz} showing the real-time performance to be used as a receding horizon planner. 

\subsection{Discussion}
As mentioned above, most of the computation time was spent on the image synthesis process with respect to all the camera poses in view-sphere. In our implementation, image projection of the point cloud $\Pb$ handled self-occlusion of the objects (e.g. a brick house in the center \autoref{fig_simulation}) in the environment to deal with more general situations. In a simpler scenario without such objects, we can reduce the computation time by omitting the occlusion-culling. It is also noteworthy that DA-RHP showed a slightly degraded detection and tracking performance around $20$ \si{\second} and $40$ \si{\second} as can be seen in \autoref{fig_iou_tracker}. These are associated with an increased velocity and acceleration as visualized in \autoref{fig_vel_accel}. We found that the motion guidance to obtain a high-detectability had caused a perceptible change of the location  and the scale of the target in the image view along with motion blur, resulting in low accuracy during a short period.  In this case, increasing the number of discretization of view spheres or reducing $\distMax$ can mitigate abrupt motions of the drone. On the hardware side, utilizing a sensor with wider FOV or gimbal stabilization can be an option. Also, we found that the nature of DA-RHP to observe more distinguishable background  often changes the relative bearing (see the bottom of \autoref{fig_simulation}), which might make tracking of the target challenging.  
For example, the case with the CSR-DCF tracker (top left of \autoref{fig_iou_tracker}) indicates the temporarily poorer  IOU performance of DA-RHP  than the plain method, up to 
around 40 \si{\sec} ($t_3$ in \autoref{fig_simulation}) 
when the snow misled the tracker. However, it is because the relative view angle of the plain method remained almost same in the top-left figures of \autoref{fig_simulation} during that interval.

\setlength\doublerulesep{0.4pt} 
\begin{table}[h]
\begin{center}
 \begin{adjustbox}{max width=0.48\textwidth}
\begin{tabular}{l|cc|cc|c}
\toprule[1pt]
\hline
\multirow{2}{*}{} & \multicolumn{2}{c|}{Tracking (IOU)} & \multicolumn{2}{c|}{Detection (IOU/AP)} & \multirow{2}{*}{dist. [\si{\meter}]}  \\
                  &     KCF \cite{henriques2014high}      & CSR-DCF \cite{lukezic2017discriminative}           &      DSFD \cite{li2019dsfd}     &      YOLO \cite{redmon2018yolov3}     &  \\\cline{1-6} \hline
        plain RHP          & 0.0522           &          0.5175 &    0.5332/  0.0.3197     &      0.5089 / 0.1879    & 73 \\

          DA-RHP        &    0.4663       &         0.5725  &   0.8876/ 0.7313        &  0.7042 / 0.3277          & 91 \\ 
\hline
\end{tabular}

\end{adjustbox}
\end{center}
 \caption{Performance of detectability }
 \label{Table:params}

\end{table}

\section{Conclusion and future works}
In this work, we presented a detectability metric and a chasing motion strategy named DA-RHP which jointly embraces the color distinguishability and total travel distance. In almost-real simulations, we validated the enhanced performance of the object identifiers by comparing DA-RHP with a plain chasing planner without detectability consideration. We also confirmed the reduced complexity in the graph search method from analytical topological sorting, validating the online performance. 
As future works, 
% 1.Detectability 
we will extend the proposed algorithm into real-world scenarios, validating the color detectability metric in various datasets. 
% 2. collision safety 
Especially, the collision safety will be included by applying the approach proposed in the previous works of the authors \cite{jeon2019online,bs2020integrated}. 
 Also, we will consider cases where the camera drone receives the point cloud information on-the-fly, enabling the capability to handle dynamic environments.               

% \addtolength{\textheight}{-12cm}   % This command serves to balance the column lengths
                                  % on the last page of the document manually. It shortens
                                  % the textheight of the last page by a suitable amount.
                                  % This command does not take effect until the next page
                                  % so it should come on the page before the last. Make
                                  % sure that you do not shorten the textheight too much.

%%%%%%%%%%%%%%%%%%%%%%%%%%%%%%%%%%%%%%%%%%%%%%%%%%%%%%%%%%%%%%%%%%%%%%%%%%%%%%%%

%%%%%%%%%%%%%%%%%%%%%%%%%%%%%%%%%%%%%%%%%%%%%%%%%%%%%%%%%%%%%%%%%%%%%%%%%%%%%%%%

%%%%%%%%%%%%%%%%%%%%%%%%%%%%%%%%%%%%%%%%%%%%%%%%%%%%%%%%%%%%%%%%%%%%%%%%%%%%%%%%

\bibliographystyle{ieeetr}
\bibliography{bibliography}

\begin{thebibliography}{10}

\bibitem{neunert2016mpc}
M.~{Neunert}, C.~{de Crousaz}, F.~{Furrer}, M.~{Kamel}, F.~{Farshidian},
  R.~{Siegwart}, and J.~{Buchli}, ``Fast nonlinear model predictive control for
  unified trajectory optimization and tracking,'' in {\em 2016 IEEE
  International Conference on Robotics and Automation (ICRA)}, pp.~1398--1404,
  May 2016.

\bibitem{ryll2019efficient}
M.~Ryll, J.~Ware, J.~Carter, and N.~Roy, ``Efficient trajectory planning for
  high speed flight in unknown environments,'' in {\em 2019 International
  Conference on Robotics and Automation (ICRA)}, pp.~732--738, IEEE, 2019.

\bibitem{qin2018vins}
T.~Qin, P.~Li, and S.~Shen, ``Vins-mono: A robust and versatile monocular
  visual-inertial state estimator,'' {\em IEEE Transactions on Robotics},
  vol.~34, no.~4, pp.~1004--1020, 2018.

\bibitem{lukezic2017discriminative}
A.~Lukezic, T.~Vojir, L.~Cehovin~Zajc, J.~Matas, and M.~Kristan,
  ``Discriminative correlation filter with channel and spatial reliability,''
  in {\em Proceedings of the IEEE Conference on Computer Vision and Pattern
  Recognition}, pp.~6309--6318, 2017.

\bibitem{redmon2016you}
J.~Redmon, S.~Divvala, R.~Girshick, and A.~Farhadi, ``You only look once:
  Unified, real-time object detection,'' in {\em Proceedings of the IEEE
  conference on computer vision and pattern recognition}, pp.~779--788, 2016.

\bibitem{bonatti2019toward}
R.~{Bonatti}, C.~{Ho}, W.~{Wang}, S.~{Choudhury}, and S.~{Scherer}, ``Towards a
  robust aerial cinematography platform: Localizing and tracking moving targets
  in unstructured environments,'' in {\em 2019 IEEE/RSJ International
  Conference on Intelligent Robots and Systems (IROS)}, pp.~229--236, Nov 2019.

\bibitem{henriques2014high}
J.~F. Henriques, R.~Caseiro, P.~Martins, and J.~Batista, ``High-speed tracking
  with kernelized correlation filters,'' {\em IEEE transactions on pattern
  analysis and machine intelligence}, vol.~37, no.~3, pp.~583--596, 2014.

\bibitem{8967840}
B.~F. {Jeon} and H.~J. {Kim}, ``Online trajectory generation of a mav for
  chasing a moving target in 3d dense environments,'' in {\em 2019 IEEE/RSJ
  International Conference on Intelligent Robots and Systems (IROS)},
  pp.~1115--1121, Nov 2019.

\bibitem{bs2020integrated}
B.~F. Jeon, Y.~Lee, and H.~Kim, ``Integrated motion planner for real-time
  aerial videography with a drone in a dense environment,'' in {\em
  International Conference on Robotics and Automation (ICRA). To appear}, IEEE,
  2020.

\bibitem{falanga2018pampc}
D.~Falanga, P.~Foehn, P.~Lu, and D.~Scaramuzza, ``Pampc: Perception-aware model
  predictive control for quadrotors,'' in {\em 2018 IEEE/RSJ International
  Conference on Intelligent Robots and Systems (IROS)}, pp.~1--8, IEEE, 2018.

\bibitem{katoch2019edge}
R.~Katoch and J.~Ueda, ``Edge-preserving camera trajectories for improved
  optical character recognition on static scenes with text,'' {\em IEEE
  Robotics and Automation Letters}, vol.~4, no.~4, pp.~4467--4474, 2019.

\bibitem{avidan2004support}
S.~Avidan, ``Support vector tracking,'' {\em IEEE transactions on pattern
  analysis and machine intelligence}, vol.~26, no.~8, pp.~1064--1072, 2004.

\bibitem{li2019dsfd}
J.~Li, Y.~Wang, C.~Wang, Y.~Tai, J.~Qian, J.~Yang, C.~Wang, J.~Li, and
  F.~Huang, ``Dsfd: dual shot face detector,'' in {\em Proceedings of the IEEE
  Conference on Computer Vision and Pattern Recognition}, pp.~5060--5069, 2019.

\bibitem{collins2005online}
R.~T. Collins, Y.~Liu, and M.~Leordeanu, ``Online selection of discriminative
  tracking features,'' {\em IEEE transactions on pattern analysis and machine
  intelligence}, vol.~27, no.~10, pp.~1631--1643, 2005.

\bibitem{kristan2007probabilistic}
M.~Kristan, J.~Per{\v{s}}, A.~Leonardis, and S.~Kova{\v{c}}i{\v{c}}, {\em
  Probabilistic tracking using optical flow to resolve color ambiguities}.
\newblock na, 2007.

\bibitem{ueda2017color}
Y.~Ueda, T.~Koga, N.~Suetake, and E.~Uchino, ``Color quantization method based
  on principal component analysis and linear discriminant analysis for
  palette-based image generation,'' {\em Optical Review}, vol.~24, no.~6,
  pp.~741--756, 2017.

\bibitem{3dtarget}
H.~{Song}, W.~{Choi}, S.~{Lim}, and H.~{Kim}, ``Target localization using rgb-d
  camera and lidar sensor fusion for relative navigation,'' in {\em 2014 CACS
  International Automatic Control Conference (CACS 2014)}, pp.~144--149, Nov
  2014.

\bibitem{rgbd}
M.~{Luber}, L.~{Spinello}, and K.~O. {Arras}, ``People tracking in rgb-d data
  with on-line boosted target models,'' in {\em 2011 IEEE/RSJ International
  Conference on Intelligent Robots and Systems}, pp.~3844--3849, Sep. 2011.

\bibitem{lan2018rgbI}
X.~Lan, M.~Ye, S.~Zhang, H.~Zhou, and P.~C. Yuen, ``Modality-correlation-aware
  sparse representation for rgb-infrared object tracking,'' {\em Pattern
  Recognition Letters}, 2018.

\bibitem{redmon2018yolov3}
J.~Redmon and A.~Farhadi, ``Yolov3: An incremental improvement,'' {\em arXiv
  preprint arXiv:1804.02767}, 2018.

\bibitem{jungnickel2005graphs}
D.~Jungnickel and D.~Jungnickel, {\em Graphs, networks and algorithms}.
\newblock Springer, 2005.

\bibitem{cormen2009introduction}
T.~H. Cormen, C.~E. Leiserson, R.~L. Rivest, and C.~Stein, {\em Introduction to
  algorithms}.
\newblock MIT press, 2009.

\bibitem{chen2016tracking}
J.~Chen, T.~Liu, and S.~Shen, ``Tracking a moving target in cluttered
  environments using a quadrotor,'' in {\em 2016 IEEE/RSJ International
  Conference on Intelligent Robots and Systems (IROS)}, pp.~446--453, IEEE,
  2016.

\bibitem{mehrotra1992implementation}
S.~Mehrotra, ``On the implementation of a primal-dual interior point method,''
  {\em SIAM Journal on optimization}, vol.~2, no.~4, pp.~575--601, 1992.

\bibitem{mellinger2011minimum}
D.~Mellinger and V.~Kumar, ``Minimum snap trajectory generation and control for
  quadrotors,'' in {\em 2011 IEEE International Conference on Robotics and
  Automation}, pp.~2520--2525, IEEE, 2011.

\bibitem{shah2018airsim}
S.~Shah, D.~Dey, C.~Lovett, and A.~Kapoor, ``Airsim: High-fidelity visual and
  physical simulation for autonomous vehicles,'' in {\em Field and service
  robotics}, pp.~621--635, Springer, 2018.

\bibitem{jeon2019online}
B.~F. Jeon and H.~J. Kim, ``Online trajectory generation of a mav for chasing a
  moving target in 3d dense environments,'' {\em arXiv preprint
  arXiv:1904.03421}, 2019.

\end{thebibliography}

\end{document}